\newtheorem{proposition}{\bf{Proposition}}
\long\def\comment#1{} 
\newcommand{\xmath}[1] {\ensuremath{#1}\xspace}
\newcommand{\blmath}[1] {\xmath{\bm{#1}}}
\newcommand{\0}{\blmath{0}}
\newcommand{\Ab}{{\blmath A}}
\newcommand{\Fb}{{\blmath F}}
\newcommand{\Ib}{{\blmath I}}
\newcommand{\Mb}{{\blmath M}}
\newcommand{\Pb}{{\blmath P}}
\newcommand{\Xb}{{\blmath X}}
\newcommand{\bb}{{\blmath b}}
\newcommand{\kb}{{\blmath k}}
\newcommand{\rb}{{\blmath r}}
\newcommand{\ub}{{\blmath u}}
\newcommand{\xb}{{\blmath x}}
\newcommand{\zb}{{\blmath z}}
\newcommand{\Hc}{\mathcal{H}}
\newcommand{\Gc}{\mathcal{G}}
\newcommand{\Bc}{\mathcal{B}}
\newcommand{\Qc}{\mathcal{Q}}
\newcommand{\Xc}{\mathcal{X}}
\newcommand{\Rd}{{\mathbb R}}
\newcommand{\Cd}{{\mathbb C}}
\newcommand{\beq}{\begin{equation}}
\newcommand{\eeq}{\end{equation}}
\newcommand{\beqa}{\begin{eqnarray}}
\newcommand{\eeqa}{\end{eqnarray}}
\begin{document}

\title{DeepPhaseCut: Deep Relaxation in Phase for Unsupervised Fourier Phase Retrieval}

\author{Eunju~Cha,~Chanseok~Lee,~Mooseok~Jang,~and~Jong~Chul~Ye,~\IEEEmembership{Fellow,~IEEE}
\IEEEcompsocitemizethanks{\IEEEcompsocthanksitem E. Cha, C. Lee, M. Jang, and J.C. Ye are with the Department of Bio and Brain Engineering, Korea Advanced Institute of Science and Technology (KAIST), Daejeon 34141, Republic of Korea.\protect\\
E-mail: \{eunju.cha, cslee, mooseok, jong.ye\}@kaist.ac.kr}
\thanks{Manuscript received;}}

\markboth{Journal of \LaTeX\ Class Files}%
{Cha \MakeLowercase{\textit{et al.}}: DeepPhaseCut: Deep Relaxation in Phase for Unsupervised Phase Retrieval}


\IEEEtitleabstractindextext{%
\begin{abstract}
Fourier phase retrieval is a classical problem of restoring a signal only from the measured magnitude of its Fourier transform. Although Fienup-type algorithms, which use prior knowledge in both spatial and Fourier domains, have been widely used in practice, they can often stall in local minima.
  Modern methods such as PhaseLift and PhaseCut may offer performance guarantees with the help of convex relaxation. However, these algorithms are usually computationally intensive for practical use.
To address this problem, we propose a novel, unsupervised, feed-forward neural network for Fourier phase retrieval which enables immediate high quality reconstruction.
Unlike the existing deep learning approaches that use a neural network
as a regularization term or an end-to-end blackbox  model for supervised training,
our algorithm is a feed-forward neural network implementation of PhaseCut algorithm in an unsupervised learning framework.
Specifically, our network is composed of two generators: one for the phase estimation using  PhaseCut loss,
followed by another generator for image reconstruction, all of which are trained simultaneously using a cycleGAN framework without matched data.
The link to the classical Fienup-type algorithms and the recent symmetry-breaking learning approach is also revealed.
%
Extensive experiments demonstrate that the proposed method outperforms  all existing approaches in Fourier phase retrieval problems.
\end{abstract}

\begin{IEEEkeywords}
Fourier phase retrieval, PhaseCut, Fienup algorithm, Unsupervised learning, CycleGAN.
\end{IEEEkeywords}}

\maketitle

\IEEEdisplaynontitleabstractindextext

\IEEEraisesectionheading{\section{Introduction}\label{Intro}}

\IEEEPARstart{F}{ourier} phase retrieval is a classical problem to recover a signal when only the magnitudes of its Fourier transform are available. 
Specifically, for a given  signal $x(\rb),\rb\in \Rd^d$ with $d=2,3$, the measurement ${y}(\kb),\kb\in \Rd^d$ is given by 
\begin{equation}\label{eq:pr}
{y}(\kb)=\left(\mathcal{F}x \right)(\kb)  :=\Bigg\vert \int e^{-\iota\kb\cdot \rb}x(\rb)d\rb \Bigg\vert ,
\end{equation}
where 
$\mathcal{F}$ is spatial Fourier transform  and $\iota=\sqrt{-1}$, and the goal of the Fourier phase retrieval is to estimate
the unknown signal $x$ from the magnitude measurement $y$.
This problem occurs in many imaging applications such as X-ray crystallography \cite{millane1990phase, harrison1993phase}, microscopy \cite{zheng2013wide}, astronomical imaging \cite{fienup1987phase}, diffraction imaging \cite{bunk2007diffractive}, etc., where it is infeasible or highly difficult to obtain the phase information of the signal during the acquisition. 

While the  uniqueness condition for the Fourier phase retrieval can be obtained from appropriate oversampling \cite{hayes1982reconstruction,bruck1979ambiguity,bates1982fourier,shechtman2015phase}, there still exists ambiguity in the solutions due to the symmetry, such as translation, 
flipping, etc. Therefore, for a given measurement, there exists infinitely many solutions, so
phase retrieval is generally considered a very challenging ill-posed inverse problem. 

Classical algorithms for the phase retrieval problems are based on alternating projection  approaches \cite{gerchberg1972practical,fienup1978reconstruction,fienup1982phase}.
The sparsity of the signal can also be exploited as a prior knowledge to solve the phase retrieval problem \cite{jaganathan2013sparse, eldar2014sparse, pauwels2017fienup}.
While empirical performance is often good,
these algorithms often suffer from local minima.

To address the local minima problem, convex relaxation approaches such as  PhaseLift \cite{candes2013phaselift} and PhaseCut \cite{waldspurger2015phase} were introduced.
For example,
PhaseLift has received significant attention due to its link to compressed sensing, by lifting the original problem to a higher dimensional rank-one matrix recovery problem
over positive semidefinite Hermitian matrices satisfying some linear conditions.
On the other hand,  Waldspurger et al \cite{waldspurger2015phase} 
showed that the phase retrieval problem can
be reformulated  as an extension of the MAXCUT
combinatorial graph partitioning problem \cite{ding2001min} over the unit complex torus, leading to a
semidefinite relaxations of MAXCUT for the phase retrieval problem.
However, the computational complexity of these convex relaxation approaches are still often too expensive for practical applications.

Recently, various forms of deep learning approaches have been investigated as a fast and effective approach for phase retrieval \cite{metzler2018prdeep,hand2018phase,sinha2017lensless,icsil2019deep,uelwer2019phase, hyder2020solving,metzler2020deep,goy2018low}.
All of these algorithms have demonstrated high performance compared to the traditional alternating projection-based methods.
Nonetheless, iterative schemes using a pretrained network as a regularization term \cite{metzler2018prdeep,hand2018phase} leads to increased computational complexity,
whereas feed-forward methods usually require extensive amount of training data and are often sensitive to the inherent ambiguity of the solutions
\cite{metzler2020deep,goy2018low,uelwer2019phase,icsil2019deep}.

To address these issues,  this paper provides a novel unsupervised feed-forward neural network approach, dubbed DeepPhaseCut, which provides
accurate reconstruction in a real-time manner.
Inspired by the classical alternating projection methods \cite{gerchberg1972practical,fienup1978reconstruction,fienup1982phase} and the modern PhaseCut approach \cite{waldspurger2015phase},
DeepPhaseCut is composed of two neural networks: one for phase estimation
and the other for image reconstruction, which are trained simultaneously in an end-to-end manner. Specifically,  we propose 
a novel PhaseCut loss function for phase estimation in the first neural network.
Using the phase estimation from the first network, the second neural network  then refines the image reconstruction.
Inspired by  symmetry-breaking end-to-end learning scheme for phase retrieval  \cite{manekar2020},
another important contribution of this paper is the use of the adversarial loss and cycle consistency loss for the training of the overall networks, which leads to an architecture similar to cycleGAN \cite{zhu2017unpaired}. 
Accordingly, the new losses enable the unsupervised training of the neural network without matched label data. 
Using extensive experimental results, we confirm that the proposed method can restore finer details of the signal  quickly
without any matched reference data. 

This paper is structured as follows. In Section~\ref{sec:related}, we  provide an extensive review of uniqueness results and the existing algorithms for the Fourier phase retrieval problem.
We then derive the proposed method as an extension of the PhaseCut approach in Section~\ref{sec:theory}.
Implementation details are provided in Section~\ref{sec:method}. Experimental results are provided in Section~\ref{sec:experiments}, which is followed by discussion and conclusion in Section~\ref{sec:discussion} and Section~\ref{sec:conclusion}, respectively.

\section{Related Works}
\label{sec:related}

\subsection{Uniqueness Condition}

The uniqueness condition for the Fourier phase retrieval problem is usually studied
in a discrete setup. Specifically, in  the $2$-D phase retrieval problem,
which is of main interest in this paper,
the measurement is given by
\begin{align}\label{eq:b}
b[k_1,k_2] &= | \widehat x[k_1,k_2]|,\quad 0\leq k_i\leq M_i-1,
\end{align}
where $\widehat x[k_1,k_2]$ is  {\em oversampled} discrete Fourier transform (DFT):
\begin{align}\label{eq:DFT}
\widehat x[k_1,k_2] &=\frac{1}{\sqrt{M_1M_2}}\sum_{n_1,n_2=0}^{N_1,N_2}  x[n_1,n_2] e^{-\iota 2\pi \left(\frac{k_1 n_1}{M_1}+\frac{k_2 n_2}{M_2} \right)},
\end{align}
where $M_i \geq N_i$ for $i=1,2$. 
In fact, \eqref{eq:DFT} is a standard normalized DFT for the finite supported signal, i.e $x[n_1,n_2]=0$ when
$n_i> N_i-1$ or $n_i<0$ for any $i$.

It was shown in \cite{hayes1982reconstruction,bruck1979ambiguity,bates1982fourier,shechtman2015phase} that the  2-dimensional signal  $x[n_1,n_2]$ is uniquely specified  by the  magnitude
of the oversampled DFT almost surely with $M_i \geq N_i$ for $i=1,2$. Furthermore, the oversampling factor of
$M_i \geq 2N_i - 1$ for $i=1,2$ is sufficient to
guarantee the uniqueness.
However, care should be taken since the uniqueness in the phase retrieval is up to the signals in the equivalent class.
Specifically, an equivalent class to the signal $x[n_1,n_2]$ in the Fourier phase retrieval is defined as
\begin{align}
y[n_1,n_2]&\sim x[n_1,n_2],
\end{align}
if
\begin{align}\label{eq:equiv}
y[n_1,n_2]=e^{\iota\theta} x[s_1\pm n_1,s_2\pm n_2],
\end{align}
for some integer-valued shift vector $[s_1,s_2]$ and the global phase $\theta\in \Rd$. In other words, the equivalence
class is defined to be
the set of all signals which may be derived from $x[n_1,n_2]$ by
shifting,  time-reversal, or change in the global phase \cite{hayes1982reconstruction,bruck1979ambiguity,bates1982fourier,shechtman2015phase}.

\subsection{Classical  Alternating Projection Approaches}

Alternating projection-based methods such as Gerchberg-Saxton \cite{gerchberg1972practical} and hybrid input-output (HIO) algorithm by Fienup \cite{fienup1982phase}
imposes both Fourier domain and spatial domain constraint at each iteration.

To explain these algorithm more concisely, 
the measurement model with \eqref{eq:b} and \eqref{eq:DFT} is simply 
represented by a matrix form:
\begin{align}
\label{eq:fw}
\bb = T(\xb):= |\Ab\xb|,&
\end{align}
where $|\cdot|$ denotes the element-wise absolute value operation, $T(\cdot)$ refers to the forward operator from the image to the Fourier magnitude measurement, and
\begin{eqnarray*}
\bb=\begin{bmatrix} b[1] \\ \vdots\\ b[M]\end{bmatrix},&& \xb=\begin{bmatrix} x[1] \\ \vdots \\ x[N]\end{bmatrix},
\end{eqnarray*}
are the vectorized version of $b[k_1,k_2]$
and $x[n_1,n_2]$, respectively, with $N=N_1N_2$ and $M=M_1M_2$.
Furthermore,
 $\Ab\in \Cd^{M\times N}$ is a $M\times N$ submatrix of $M\times M$ DFT matrix
 that corresponds to the non-zero support of $x[n_1,n_2]$, which
can be represented by
\begin{align}\label{eq:A}
\Ab  =\Fb_M \Pb_{N},
\end{align}
where $\Fb_M \in \Cd^{M\times M}$ is the  $M\times M$-DFT matrix,
and $\Pb_N\in \Cd^{M\times N}$ denotes the zero-padding matrix converting $\xb\in \Cd^N$
to a zero-padded $M$-dimensional matrix $\Pb_N\xb\in \Cd^M$.


With this notation,  the alternating projection algorithms can be represented by the following update:
\begin{align}
\ub^{(i)} &:=\Gc\left(\widehat \xb^{(i)}\right)  \label{eq:s1} \\
\zb^{(i)} &= \Fb_M^\top \left(\bb\odot \ub^{(i)}\right) \label{eq:s2} \\
\xb^{(i+1)}&:= \Hc\left( \zb^{(i)}\right) \label{eq:s3}
\end{align}
where   $\widehat \xb^{(i)}=\Ab\xb^{(i)}$ is the $i$-th estimate of  Fourier spectrum, $\ub^{(i)}$ refers the the $i$-th estimate of the phase components on the unit complex modulus, 
$\bb$ denotes the Fourier magnitude measurement, $\odot$ is the element-wise product (Hadamard product),
and $\zb^{(i)}$ denotes the $i$-th iteration intermediate image reconstruction. Accordingly,
$\Gc$ and $\Hc$ refer to the phase update  and image refinement functions, respectively.
%

The phase update function for Gerchberg-Saxton \cite{gerchberg1972practical} and HIO algorithm by Fienup \cite{fienup1982phase}
is identically given by
\begin{align}
\Gc(\widehat \xb^{(i)})[n]  := \frac{\widehat x^{(i)}[m]}{|\widehat x^{(i)}[m]|}.
\end{align}
On the other hand, the image update functions are different.
In particular,   Gerchberg-Saxton \cite{gerchberg1972practical} assumes the known magnitude in the image domain so that
the image update
is given by
\begin{align}
\Hc(\zb^{(i)})[n]  :=  |x[n]| \frac{ z^{(i)}[m]}{|z^{(i)}[m]|},
\end{align}
with $|x[n]|$ being the known magnitude in the image domain, whereas
the HIO algorithm \cite{fienup1982phase} imposes the image domain constraint:
\begin{align}
\Hc(\zb^{(i)})[n]   := \begin{cases}
z^{(i)}[n], & n \notin \Gamma, \\
x^{(i)}[n] - \beta z^{(i)}[n], & n \in \Gamma,
\end{cases}
\label{eq:HIO}
\end{align}
for some parameter $\beta>0$, 
where $\Gamma$ is a set of indices for which the intermediate update $z^{(i)}[n]$ violates the spatial constraints. 

Since spatial constraints such as sparsity, positivity, and support constraint can be easily incorporated,
HIO and its variants \cite{luke2004relaxed,chen2007application,rodriguez2013oversampling} usually offer a suitable reconstruction with little computing effort. Unfortunately, the convergence of the HIO algorithm to the global solution is not guaranteed, and the  practical convergence  largely depends on the initialization and the presence of noise.

\subsection{Convex Relaxation Approaches}

The modern phase retrieval approaches called PhaseLift \cite{candes2013phaselift} and PhaseCut \cite{waldspurger2015phase}  were developed
to address the local minima problems in the alternating minimization approaches via convex relaxation.
In particular, the two algorithms can be considered as dual to each other in that PhaseLift relies on the semidefinite relaxation of the signal, whereas PhaseCut exploits the semidefinite relaxation of phase.

More specifically, PhaseLift is based on the following observation:
\begin{align}
| \Ab_m\xb|^2=b_m^2 &\Longleftrightarrow \mathrm{Tr}(\Ab_m^\top \Ab_m \Xb) =b_m^2,
\end{align}
where  $\Ab_m$ denotes the $m$-th row of $\Ab$ and
$\mathrm{Tr}(\cdot)$ denotes the matrix trace, and $\Xb = \xb\xb^\top \in \Cd^{N\times N}$.
Note that $\Xb$ is a rank-1 matrix.
This observation leads to the following optimization problem:
\begin{align*}
\mbox{minimize} \quad & \mathrm{Rank}(\Xb), \\
\mbox{subject to} \quad& \mathrm{Tr}(\Ab_m^\top \Ab_m \Xb)= b_m^2,~m=1,\cdots, M\\
& \Xb \succeq \0,
\end{align*}
where $\Xb \succeq \0$ denotes the positive semidefinite matrices.
Since the rank minimization problem is nonconvex, PhaseLift resorts to the semidefinite relaxation
by replacing it with the minimization of $\mathrm{Tr}(\Xb)$. This semidefinite relaxation has been shown to recover the true signal $\xb$ exactly under
some oversampling factors \cite{candes2013phaselift}.

On the other hand, PhaseCut \cite{waldspurger2015phase}  starts with the following minimization problem:
\begin{align}
\min\limits_{\ub,\xb} &~ \|\Ab\xb-\mathrm{diag}(\bb)\ub\|^2 \label{eq:phasecut}\\
\mbox{subject to} &~|u_m|=1,~m=1,\cdots, M 
\end{align}
where we optimize over both the phase $\ub \in \Cd^M$ and the signal $\xb\in \Cd^N$.
The problem can be jointly minimized with respect to $\ub$ and $\xb$. More specifically,
for a given $\ub$, the signal estimate is given by
\begin{align}
\xb = \Ab^\dag \mathrm{diag}(\bb)\ub,
\end{align}
where $\Ab^\dag$ is the pseudo-inverse of $\Ab$. By plugging this in \eqref{eq:phasecut}, we have
\begin{align}
\min_{\ub\in \Cd^M} &~\ub^\top \Mb \ub \label{eq:lM} \\
\mbox{subject to} &~|u_i|=1,~i=1,\cdots, m  \notag
\end{align}
where
\begin{align}\label{eq:M}
\Mb:=  \mathrm{diag}(\bb)(\Ib -\Ab\Ab^\dag )  \mathrm{diag}(\bb).
\end{align}
It turns out that  the resulting problem is nonconvex with respect to the phase $\ub$, so  a convex relaxation  in the form of
 semidefinite program (SDP)  was also derived  \cite{waldspurger2015phase}.
This problem has a structure similar to the
classical MAXCUT relaxation, which is the origin of the name PhaseCut.
Unfortunately, the resulting SDP program should be solved using interior point methods, which are quite computationally expensive \cite{waldspurger2015phase}.


\subsection{Deep Learning Approaches}
Deep learning approaches for phase retrieval studies can be broadly divided into two categories based on how the neural network is utilized to solve the problem.

In the first category, a pretrained network is employed as a regularization or prior for the reconstruction \cite{metzler2018prdeep,hand2018phase,hyder2020solving}.
Specifically, in \cite{metzler2018prdeep}, a pretrained network for the denoising task serves as a regularization when exploiting the regularization-by-denoising (RED) framework to solve the phase retrieval problem. Hand et al \cite{hand2018phase} proposed  generative prior from a pretrained generative model to restore phase information. 
In \cite{hyder2020solving},  the reference provided by the trained network was exploited as a prior during the reconstruction process.
Although leveraging the knowledge from the pretrained network leads to a better reconstruction quality,  the optimization problems defined in these algorithms have to be solved in an iterative manner. 

In the second category, a network is trained in an end-to-end manner to restore the signal from the given measured magnitude  through supervised learning scheme \cite{icsil2019deep, uelwer2019phase}.
For example, for a large number of paired magnitude and label data set $(\bb,\xb)$, the neural network $\Qc_\Theta$ parameterized by $\Theta$ 
is trained by minimizing the $l_2$ loss:
\begin{align}\label{eq:super}
\min_\Theta E\| \xb - \Qc_\Theta(\bb)\|^2,
\end{align}
where $E[\cdot]$ denotes the mean with respect to the training data set.
Once the neural network is trained with the estimated parameter $\Theta^*$, at the inference phase  the
image estimate can be obtained by $\Gc_{\Theta^*}(\bb)$ for a given magnitude data $\bb$.

Although these algorithms outperform the traditional alternating projection-based methods, obtaining a large amount of matched data for training is often difficult in practice.
Moreover, most existing variations in this category aim at improving the performance of the neural network using various tricks, as their feed-forward neural networks are not sufficient for high quality reconstruction. For example, in \cite{icsil2019deep}, a hybrid use of two networks and the HIO method was proposed to address the phase retrieval problem. Uelwer et al \cite{uelwer2019phase} proposed using the Fourier measurement and the latent variable together as input to train the conditional generative adversarial network (GAN) \cite{isola2017image}. 

Yet another important technical issue is
that the supervised training in \eqref{eq:super} is very sensitive  to the ambiguity
within the equivalent class as stated in \eqref{eq:equiv}. 
This is because the correct loss function in \eqref{eq:super} should be the distance from the equivalent
class rather than a specific label, but the supervised learning frameworks usually  aim at fitting to a particular label, leading to the fundamental limitations \cite{manekar2020}.
To address this problem, the authors in \cite{manekar2020} proposed symmetry-breaking learning approach
by minimizing the following loss function:
\begin{align}\label{eq:simb}
\min_\Theta E \| \bb - |\Ab \Qc_\Theta(\bb)|\|^2,
\end{align}
which is in fact minimizing the data consistency term  \cite{zhu2018image}.

\begin{figure*}[!t] 	
\center{ 
\includegraphics[width=18cm]{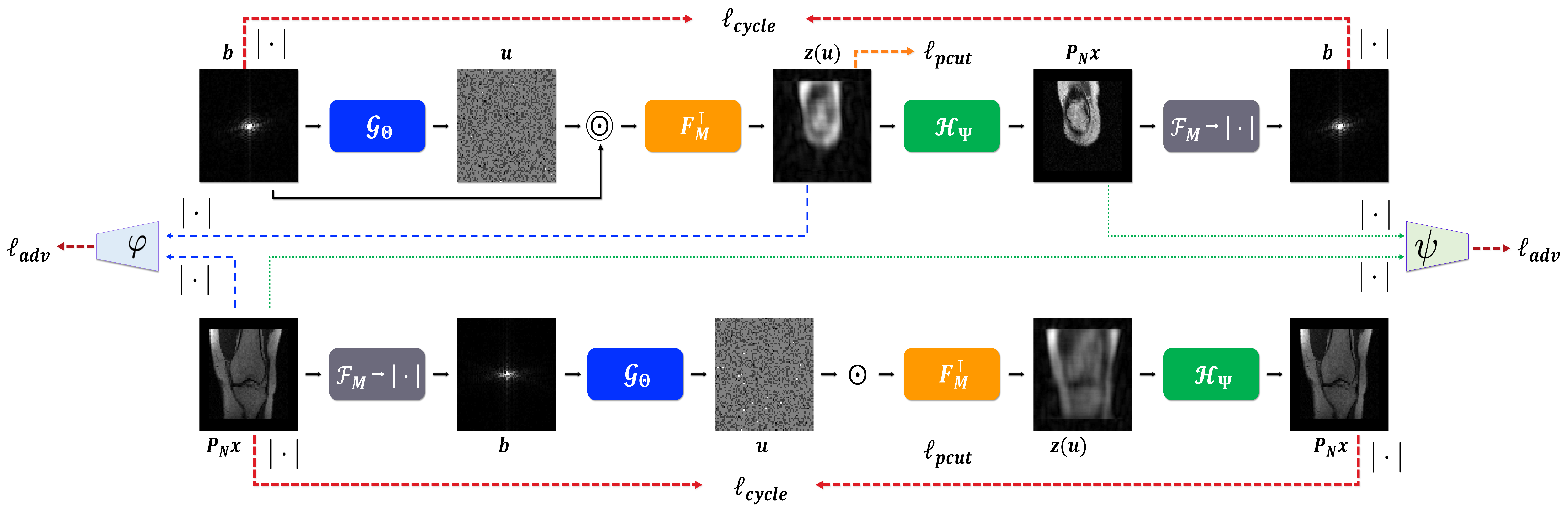}
}
\caption{Overview of the DeepPhaseCut training scheme. $\Pb_N$ is a zero-padding matrix converting $\xb \in \Cd^N$ to a zero-padded $M$-dimensional matrix $\Pb_N \xb \in \Cd^M$. $\Fb_M$ denotes $M \times M$-DFT matrix. 
There are two generators $\Gc_\Theta$, $\Hc_\Psi$ and two discriminators $\varphi$, $\psi$.
$\Gc_\Theta$ and $\Hc_\Psi$ refer to the phase network and image refinement network.
The network employed three losses -- adversarial loss, PhaseCut loss, and cycle consistency loss. Here, $|\cdot|$ refers to  the element-wise magnitude.}
\label{fig:flowchart}
\end{figure*}

\section{DeepPhaseCut}
\label{sec:theory}

\subsection{Overall framework}

In this section, we derive DeepPhaseCut algorithm as a feed-forward deep neural network approach that is trained in an unsupervised manner to  address the ambiguity
of the solutions. 

Specifically, similar to the alternating projection methods in \eqref{eq:s1}-\eqref{eq:s3},  our DeepPhaseCut algorithm is composed of the following steps:
\begin{align}
\ub &=\Gc_\Theta(\bb) \label{eq:u} \\
\zb(\ub) &= \Fb_M^\top (\bb\odot\ub) \label{eq:z} \\
\xb &=\Hc_\Psi(\zb(\ub)) \label{eq:z}
\end{align}
where $\Gc_\Theta$ is the phase estimation neural network parameterized by $\Theta$, and $\Hc_\Psi$ is the image refinement network parameterized by $\Psi$.
In the following, we will explain how these two networks can be trained.

\subsection{Loss functions}

\subsubsection{PhaseCut loss}

One of the important contributions of our method is the discovery of PhaseCut loss, which is useful
for the phase network training.
This loss function arises from the PhaseCut cost in \eqref{eq:lM}, but the backpropagating the original loss function in \eqref{eq:lM} is difficult
due to the subspace formulation. Therefore, our loss is derived so that 
it can be used for backpropagation for the phase network training.
\begin{proposition}\label{prp:pcut}
The PhaseCut cost in \eqref{eq:lM} can be equivalently represented by
\begin{align*}
\ell_{pcut}(\ub) &= \ub^\top\Mb\ub = \|\zb_{\Lambda^c}(\ub)\|_2^2,
\end{align*}
where $\zb(\ub)$ is the inverse DFT of  the estimate spectrum with respect to the phase estimate $\ub$, i.e.
\begin{align}
\zb(\ub):=  \Fb_M^\top (\bb\odot \ub),
\end{align}
and $\Lambda^c$ denotes the indices outside of the non-zero support of the ground-truth image,
and $\zb_{\Lambda^c}$ is a subvector on the index set of $\Lambda^c$.
  \end{proposition}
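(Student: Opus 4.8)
The plan is to reduce the identity to the unitarity of the DFT matrix together with the column structure of $\Ab$ in \eqref{eq:A}. First I would record that $\Ab=\Fb_M\Pb_N$ has orthonormal columns: the normalized $M\times M$ DFT matrix $\Fb_M$ is unitary, so $\Fb_M^\top\Fb_M=\Ib_M$ (here and below $(\cdot)^\top$ denotes the conjugate transpose in the complex case, as is implicit in the paper's quadratic form), and the zero-padding matrix $\Pb_N$ selects $N$ distinct standard basis coordinates, so $\Pb_N^\top\Pb_N=\Ib_N$. Hence $\Ab^\top\Ab=\Pb_N^\top\Fb_M^\top\Fb_M\Pb_N=\Ib_N$, and the pseudo-inverse collapses to $\Ab^\dag=\Ab^\top=\Pb_N^\top\Fb_M^\top$.

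Next I would compute the projector. From $\Ab^\dag=\Ab^\top$ we get $\Ab\Ab^\dag=\Fb_M\Pb_N\Pb_N^\top\Fb_M^\top=\Fb_M\Db_\Lambda\Fb_M^\top$, where $\Db_\Lambda:=\Pb_N\Pb_N^\top$ is the $M\times M$ diagonal $0/1$ matrix with ones exactly on the support $\Lambda$ of the ground-truth image (so that $\Pb_N$ embeds $\Cd^N$ onto the coordinates indexed by $\Lambda$). Consequently $\Ib-\Ab\Ab^\dag=\Fb_M(\Ib_M-\Db_\Lambda)\Fb_M^\top=\Fb_M\Db_{\Lambda^c}\Fb_M^\top$, with $\Db_{\Lambda^c}$ the complementary $0/1$ diagonal projector onto the coordinates in $\Lambda^c$.

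Then I would substitute into the definition \eqref{eq:M}, $\Mb=\diag(\bb)(\Ib-\Ab\Ab^\dag)\diag(\bb)=\diag(\bb)\Fb_M\Db_{\Lambda^c}\Fb_M^\top\diag(\bb)$, and expand the quadratic form. Since $\diag(\bb)$ is a real diagonal matrix and $\diag(\bb)\ub=\bb\odot\ub$, and using the definition $\zb(\ub)=\Fb_M^\top(\bb\odot\ub)$, one gets $\ub^\top\Mb\ub=\bigl(\Fb_M^\top\diag(\bb)\ub\bigr)^\top\Db_{\Lambda^c}\bigl(\Fb_M^\top\diag(\bb)\ub\bigr)=\zb(\ub)^\top\Db_{\Lambda^c}\zb(\ub)$. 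Finally, because $\Db_{\Lambda^c}$ is an orthogonal projection, $\Db_{\Lambda^c}^2=\Db_{\Lambda^c}=\Db_{\Lambda^c}^\top$, that retains precisely the entries indexed by $\Lambda^c$, we obtain $\zb(\ub)^\top\Db_{\Lambda^c}\zb(\ub)=\|\Db_{\Lambda^c}\zb(\ub)\|_2^2=\|\zb_{\Lambda^c}(\ub)\|_2^2$, which is the claimed identity; this also matches the stated fact that $\zb(\ub)$ is the inverse DFT of the estimated spectrum.

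The one delicate point to state carefully is the pseudo-inverse step: the equalities $\Ab^\dag=\Ab^\top$ and "$\Ab\Ab^\dag$ equals the orthogonal projection onto $\range(\Ab)$" require $\Ab$ to have full column rank, i.e. orthonormal columns, which is exactly where the oversampling assumption $M\ge N$ (and the unitarity of $\Fb_M$) enters; I would flag this explicitly. A secondary bookkeeping matter is being consistent about transpose versus conjugate transpose for the complex DFT, since the adjoint is what makes the quadratic form real — I would either fix the convention that $(\cdot)^\top$ means conjugate transpose throughout, or carry the conjugations explicitly. Everything else is routine linear algebra and requires no further work.
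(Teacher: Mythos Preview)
Your proposal is correct and follows essentially the same route as the paper's proof: both use $\Fb_M^\top\Fb_M=\Ib$ and $\Pb_N^\top\Pb_N=\Ib$ to reduce $\Ab\Ab^\dag$ to $\Fb_M\Pb_N\Pb_N^\top\Fb_M^\top$, then identify $\Ib-\Pb_N\Pb_N^\top$ with the diagonal projector onto $\Lambda^c$ so that the quadratic form collapses to $\|\zb_{\Lambda^c}(\ub)\|_2^2$. Your explicit remarks about $\Ab^\dag=\Ab^\top$ (full column rank) and the transpose-versus-conjugate-transpose convention are welcome clarifications the paper leaves implicit, but do not constitute a different argument.
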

\begin{proof}
Using the definition of $\Mb$ in \eqref{eq:M}, 
the cost function in \eqref{eq:lM} is given by
\begin{align*}
\ell_{pcut}(\ub) &= \ub^\top\Mb\ub \notag\\
&=  \ub^\top\mathrm{diag}(\bb)(\Ib -\Ab\Ab^\dag )  \mathrm{diag}(\bb)\ub\notag\\
&= \widehat\zb(\ub)^\top(\Ib -\Ab\Ab^\dag )\widehat\zb(\ub)
\end{align*}
where
\begin{align*}
\widehat\zb(\ub):=\mathrm{diag}(\bb)\ub=\bb\odot\ub.
\end{align*}
Furthermore, using the representation in \eqref{eq:A} of  the oversampled DFT matrix $\Ab$,
we have
\begin{align*}
\Ab\Ab^\dag  & = \Fb_M \Pb_{N} \left(\Pb_N^\top \Fb_M^\top \Fb_M \Pb_N\right)^{-1} \Pb_N^\top \Fb_M^\top\\
&=\Fb_M \Pb_{N} \Pb_N^\top \Fb_M^\top,
\end{align*}
where we use  $\Fb_M^\top \Fb_M=\Ib$  for the DFT matrix and $\Pb_N^\top\Pb_N=\Ib$ for the projection matrix.
Accordingly, the phase cut cost function becomes
\begin{align*}
\ell_{pcut}(\ub) 
&= \widehat\zb(\ub)^\top(\Ib-\Fb_M \Pb_{N} \Pb_N^\top \Fb_M^\top)\widehat\zb(\ub)\\
&=\zb(\ub)^\top (\Ib- \Pb_{N} \Pb_N^\top) \zb(\ub) ,
\end{align*}
where  we use
$$\zb(\ub)=\Fb_M^\top\widehat\zb(\ub) =   \Fb_M^\top (\bb\odot \ub) . $$
 Furthermore, it is easy to see that $\Pb_{N} \Pb_N^\top$ is a diagonal
matrix with 1 at the non-zero support indices $\Lambda$.  Therefore, we have
\begin{align*}
\ell_{pcut}(\ub) 
&= \|\zb_{\Lambda^c}(\ub)\|_2^2,
\end{align*}
where $\Lambda^c$ denotes the complement set of $\Lambda$.
\end{proof}
Proposition~\ref{prp:pcut} 
implies that PhaseCut loss is actually the  energy leakage term beyond the image support  so that its minimization
results in a phase recovery that meets the support constraint.
PhaseCut loss should be minimized not only for the original reconstruction using Fourier magnitude measurements, but also for reconstruction
from any synthesized
Fourier magnitude measurement if the phase network is trained correctly. This leads to the following composite PhaseCut loss:
\begin{align}
\label{eq:pcut_loss}
\ell_{pcut}(\Theta) := \|\zb_{\Lambda^c}(\Gc_\Theta(\bb))\|^2 + \|\zb_{\Lambda^c}(\Gc_\Theta(|\Ab\xb|)\|^2
\end{align}
where $\bb$ is the actual Fourier magnitude measurement, and $\xb$ is any image in the training data.


\subsubsection{CycleGAN loss}

In addition to the support constraint from PhaseCut loss,  we employ the symmetry breaking loss from \eqref{eq:simb}:
\begin{align}
\label{eq:symb2}
\ell_{symb}(\Theta, \Psi) 
& =  \| \bb - |\Ab \Hc_\Psi(\zb(\Gc_\Theta(\bb)) |\|^2. 
\end{align}
Note that the symmetry breaking loss is indeed the self-consistency condition for the Fourier
magnitude measurement. This leads to the additional self-consistency terms
for the image. More specifically, the image domain self-consistency can be imposed by minimizing the following loss:
\begin{align}
\ell_{cst}(\Theta, \Psi) = &~ \| |\xb| - |\zb(\Gc_\Theta(|\Ab\xb|))| \|^2 \nonumber \\
&+ \| |\xb| - |\Hc_\Psi(\zb(\Gc_\Theta(|\Ab\xb|)))| \|^2, \label{eq:self}
\end{align}
which implies that  the synthesized images from the Fourier magnitude with respect to any image $\xb$ should be as close as possible
to the original $\xb$.  In fact,  the first and second term in \eqref{eq:self} impose the self-consistency to the image from the phase network and the image refinement
network, respectively.
If we sum the self-consistency terms in both image and measurement domains, we can obtain a cycle-consistency loss for our network:
\begin{align}
\label{eq:cycle_loss}
\ell_{cycle}(\Theta, \Psi) =& \| |\xb| - |\Hc_\Psi(\zb(\Gc_\Theta(|\Ab\xb|)))| \|^2 \\
& + \| |\xb| - |\zb(\Gc_\Theta(|\Ab\xb|))| \|^2 \nonumber \\
& +  \| \bb - |\Ab \Hc_\Psi(\zb(\Gc_\Theta(\bb)) |\|^2.  \nonumber
\end{align}


Next consideration is the reconstruction loss.
Recall that there exists ambiguity in the phase retrieval problem up to the equivalent class, so that
it is not a good idea to use the sample-wise distances like $l_1$ or $l_2$ as the reconstruction loss.
Instead,
we are interested in minimizing the distance between the distributions of the generated samples and their equivalent classes.
This can be done using adversarial loss \cite{goodfellow2014generative}, and we employ the least squares GAN (LS-GAN) loss \cite{mao2017least} as our adversarial loss.
Specifically, we employ the LS-GAN loss for both phase and refinement networks that generate
 the intermediate and final image, respectively.
 The corresponding adversarial losses are given  for the discriminator update by:
 \begin{align}
\label{eq:GAN_loss}
&\ell_{phase\_adv}(\varphi):=\\
&   \int_\Xc \left( \varphi(|\xb|) - 1 \right)^2 d\mu(\xb) + \int_\Bc  \varphi(|\zb(\Gc_\Theta(\bb))|)^2 d\nu(\bb),  \nonumber \\
&\ell_{refine\_adv}(\psi):=\\
&   \int_\Xc \left( \psi(|\xb|) - 1 \right)^2 d\mu(\xb) + \int_\Bc  \psi(|\Hc_\Psi(\zb(\Gc_\Theta(\bb))|)^2 d\nu(\bb), \nonumber 
\end{align}
and  for the generator update by:
 \begin{align}
\ell_{phase\_adv}(\Theta):=&
  \int_\Bc \left( \varphi(|\zb(\Gc_\Theta(\bb))|) - 1 \right)^2 d\nu(\bb),\\
\ell_{refine\_adv}(\Theta,\Psi):=&
   \int_\Bc \left( \psi(|\Hc_\Psi(\zb(\Gc_\Theta(\bb)))|) - 1 \right)^2 d\nu(\bb),
\end{align}
where we assume that the target image space $\Xc$ is equipped with a probability measure $\mu$, and
the measurement space  $\Bc$ is  with a probability measure $\nu$. 

Putting them together, the total loss function is given by:
\begin{align}
&\ell_{total}(\Theta, \Psi; \varphi,\psi)\notag \\
& :=   \ell_{adv}(\Theta, \Psi; \varphi,\psi) + \kappa \ell_{cycle}(\Theta, \Psi) + \rho, \ell_{pcut}(\Theta), \label{eq:loss_DeepPhaseCut}
\end{align}
where $\kappa$ and $\rho$ are hyper-parameters, and
\begin{align}
 \ell_{adv} = \ell_{phase\_adv}+\ell_{refine\_adv}.
\end{align}
The overall network can be trained by the following minimization problem:
\begin{align}
\min\limits_{\Theta,\Psi,\varphi,\psi}\ell_{total}(\Theta, \Psi; \varphi,\psi).
\end{align}
Note that the first two terms in \eqref{eq:loss_DeepPhaseCut} is similar to the cycleGAN loss,
although the generator for the forward mapping is replaced by a deterministic Fourier transform followed by the magnitude operation.
In fact, the mathematical origin of this structure is studied in our recent work on optimal transport driven cycleGAN \cite{sim2019optimal}, which
shows that the minimizing the statistical distances simultaneously in both measurement and image domain  leads to the
cycleGAN structure as a dual Kantorovich formulation.
The overall flowchart of DeepPhaseCut is shown in Fig. \ref{fig:flowchart}.

\section{Methods}
\label{sec:method}

\subsection{Datasets}

To verify the versatility and improved performance of the proposed method, we conducted experiments for both real-valued and complex-valued images.

\subsubsection{Facial Image}
We used CelebA dataset \cite{liu2015deep} which consists of 202,599 facial images of 10,177 different celebrities. 
We changed the RGB images to grayscale images, and resized them from the center cropped images of 108 $\times$ 108 pixels to 64 $\times$ 64 pixels. 
This preprocessing was identical to that employed in \cite{hand2018phase,uelwer2019phase}.
Among 202,599 images, 199,000 images were used for training, 1,121 images for validation, and the rest for testing. 

\subsubsection{Natural Image}
DIV2K dataset \cite{agustsson2017ntire} with 800 high-quality (2K resolution) images and BSD300 dataset \cite{MartinFTM01} were used for training. 
The RGB images were converted into grayscale images and randomly cropped images of 256 $\times$ 256 pixels were resized to 64 $\times$ 64 pixels.
To evaluate the proposed network, we used Set12, one of the most widely used benchmark datasets.

\subsubsection{MRI Image}
To validate the performance of our Fourier phase retrieval method using more realistic complex-valued data set,
we used FastMRI dataset  \cite{zbontar2018fastmri} composed of complex-valued knee images from 1,129 cases. 
We used a single coil data set, which is obtained from the multi-coil data set acquired by a clinical 1.5 T Siemens Magnetom Area, or one of the three 3T systems (Prisma, Siemens Magnetom Skyra, and Biograph mMR) using 2D TSE protocol. The acquisition parameters were as follows: the size of matrix = 320 $\times$ 320, repetition time (TR) = 2,200 $\sim$ 3,000 ms, echo time (TE) = 27 $\sim$ 34 ms, slice thickness = 3 mm, and in-plane resolution = 0.5 mm $\times$ 0.5 mm. We resized the data from the center cropped images with 320 $\times$ 320 pixels to 64 $\times$ 64 pixels.
Out of 1,129 cases, 973 sets were used for training, which corresponds to 29,877 images of training data. The remaining 156 cases were used for testing, which corresponds to 4,873 slices of test data.

\subsubsection{Microscopy Image}
As for  realistic real-valued data set,  we obtained {images of silver coated silica beads} using {a custom-built} optical microscopy.
The diameter of the microbead is about {4.5} $\mu$m.
To build a microscopic imaging system, we {used a diode pumped solid state laser emitting at 532 $n$m} as source, {the set of an} objective lens (Nikon, 10$\times$ / 0.28 NA) and {a} tube lens (f$=$200 $n$m) to magnify {the} beads, and {a} charge-coupled device (CCD) as a detector.
{For a single measurement, this imaging system captured a 10 times magnified image of the clustered beads and the original measurement size was 2,248 $\times$ 2,048.}
Approximate clusters of 120 $\times$ 120 pixels were {then sampled from each measurement.} 
{Subsequently, patch of 100 $\times$ 100 pixels was center cropped from each cluster.}
Out of 20 measurements, 18 measurements were used for training which corresponds to 55,147 images {with 100 $\times$ 100 pixels}.
The remaining 2 measurements used for testing, which corresponds to 1,000 images {with 100 $\times$ 100 pixels}.
During training, the images were randomly cropped to 64 $\times$ 64 pixels, while during the test, the images were {center} cropped to 64 $\times$ 64 pixels.

\subsection{Network Architecture}

We used two neural networks $\Gc_\Theta$ and $\Hc_\Psi$ and two discriminators $\varphi$ and $\psi$ for DeepPhaseCut. 
We employed the modified U-net architecture with spatial attention module in skip connection for two networks, as shown in Fig. \ref{fig:architecture} (a). 
In particular, U-net is comprised of four stages, each stage consisting of blocks of 3$\times$3 convolution, instance normalization\cite{ulyanov2016instance}, and the retified linear unit (ReLU). 
The number of convolutional filters was set to 64 in the first stage, which reached 1024 in the last stage by doubling at each stage.
We utilized 2$\times$2 average pooling and bilinear interpolation for the pooling and unpooling operation, respectively. 
The spatial attention module in skip and concatenation was utilized to amplify the important regions in the feature maps for efficient learning \cite{tao2019improving}. For attention module, we used 3$\times$3 convolutional layer.
Next, a spatial softmax function and a max normalization operation are applied to each channel of the feature maps and  the resulting spatial attention weight matrix is multiplied to each feature map as shown in Fig. \ref{fig:architecture}(b).
The final output of network is provided after 1$\times$1 convolution (red arrow in Fig. \ref{fig:architecture}(a)).

Since $\Gc_\Theta$ is trained to estimate complex valued phase $\ub$, both the output of $\Gc_\Theta$ and the input of $\Hc_\Psi$ are complex values. To process the complex value, we stacked the real and imaginary part of the complex data in the channel dimension, which resulted in the number of channels for the output of $\Gc_\Theta$ and the input of $\Hc_\Psi$ being 2.
The number of output channels of $\Hc_\Psi$ was determined taking into account the inherent nature of the data, which resulted in 1 and 2 for real-valued and complex-valued datasets, respectively.
The complex result can be formed using the real and imaginary channels of the output.

We adopted the PatchGAN discriminator \cite{zhu2017unpaired} for our discriminators $\varphi$ and $\psi$, which consists of five convolution layers as shown in Fig. \ref{fig:architecture}(c). 
The first convolution layer is composed of 64 sets of 4$\times$4 convolution kernel, in which the number of convolutional filters in the following layers, with the exception of the last layer, is doubled by that of the previous layer. In the last convolution layer, there is a single 1$\times$1 convolution filter to provide the output for classifying whether the patches are real or synthesized by the generators.
Our discriminators process  single channel images, which are formed by the magnitude images from the complex-valued data for an efficient and stable training.

\begin{figure}[!hbt] 	
\center{ 
\includegraphics[width=8.6cm]{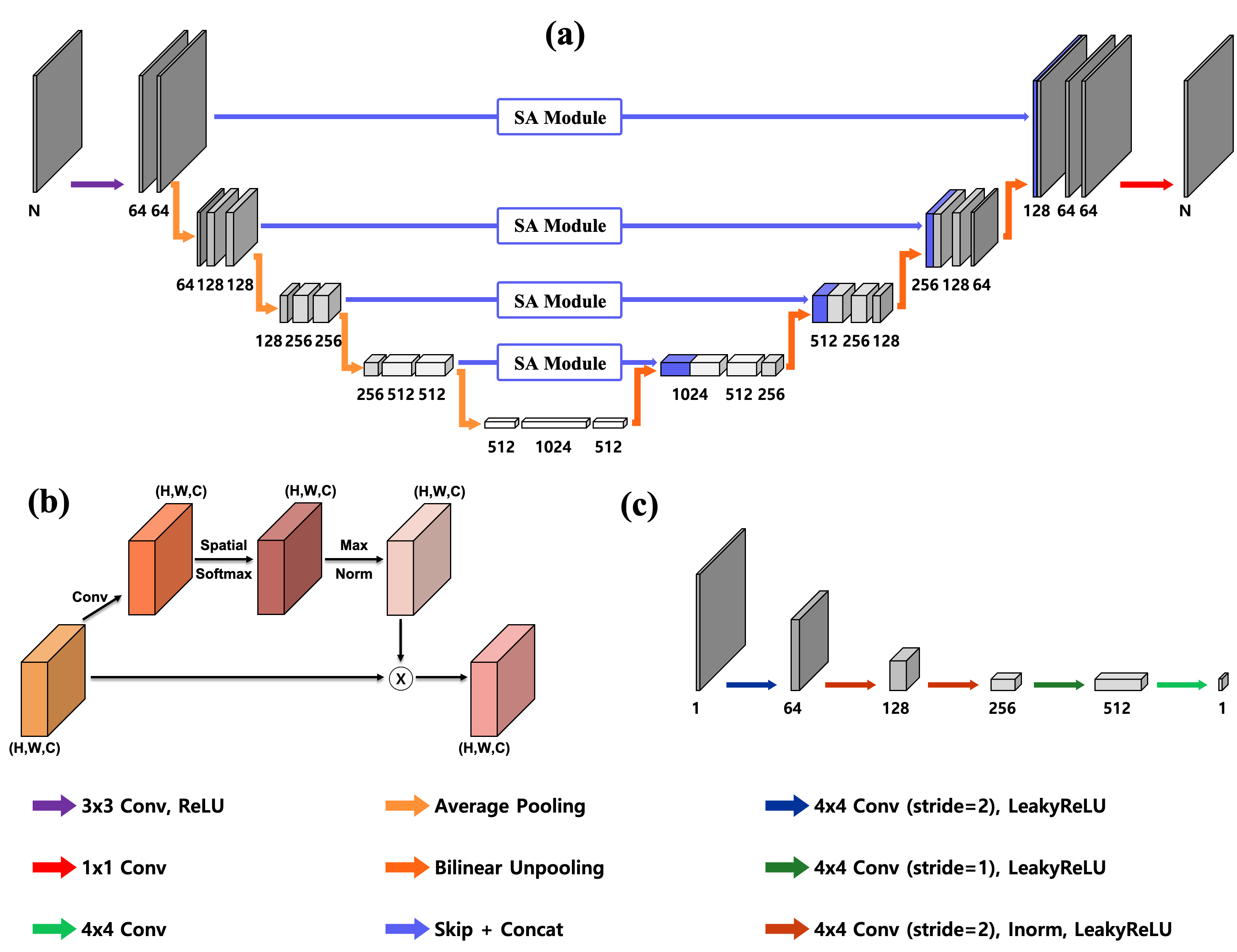}
}
\caption{(a) Modified U-net architecture for the generators $\Gc_\Theta$ and $\Hc_\Psi$. SA module denotes spatial attention module. (b) Spatial attention module \cite{tao2019improving}. (c) Network architecture of the discriminators $\varphi$ and $\psi$. }
\label{fig:architecture}
\end{figure}

\begin{figure*}[!hbt] 	
\center{ 
\includegraphics[width=16cm]{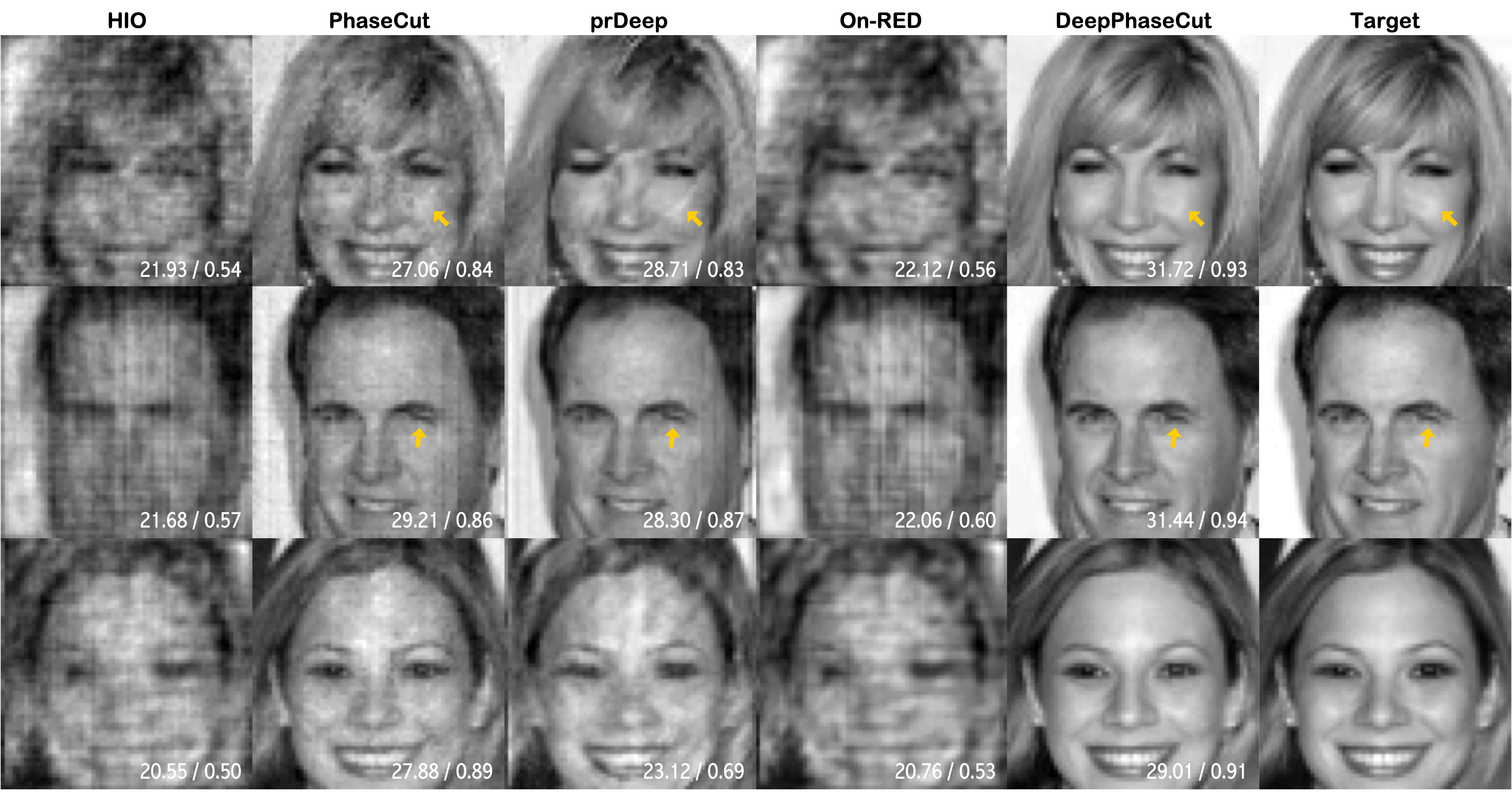}
}
\caption{CelebA data reconstruction results using HIO \cite{fienup1982phase}, PhaseCut \cite{waldspurger2015phase},  prDeep \cite{metzler2018prdeep}, On-RED \cite{wu2019online}, and proposed DeepPhaseCut. The arrows indicate remarkable parts. The values in the corners are PSNR / SSIM index for each image with respect to the target image.}
\label{fig:result_celebA}
\end{figure*}

\subsection{Network Training}

For all data sets, we employed zero-padding  to guarantee uniqueness \cite{hayes1982reconstruction,bruck1979ambiguity,bates1982fourier,shechtman2015phase}. 
In particular, the oversampling ratio was set to 4 so that the size of measured magnitude in Fourier domain was 128 $\times$ 128 pixels. 

For preprocessing, the image intensity range was set to $[0,1]$.
The proposed network was implemented in Python using PyTorch library \cite{paszke2017automatic} and trained using NVidia GeForce GTX 1080-Ti graphics processing unit. Adam optimizer \cite{kingma2014adam} was used for the optimization with parameters $\beta_1 = 0.5$ and $\beta_2 = 0.999$. The update rate of the generators and discriminators was set to $1:1$ at each step. 
The specific details for various applications are as follows.

\subsubsection{Facial Images}
For CelebA dataset, the hyper-parameters $\kappa$ and $\rho$ in \eqref{eq:loss_DeepPhaseCut} were set to 20.
The network was trained for 45 epochs. 
The initial learning rate was specified to $5\times 10^{-7}$ for 20 epochs. Then the learning rate for the remaining 25 epochs was linearly reduced to $3.5 \times 10^{-7}$.
The mini-batch size was 1. 
The training using the CelebA dataset took about ten days.

\subsubsection{Natural Images}
With the DIV2K and BSD300 data sets, we set the hyper-parameters in \eqref{eq:loss_DeepPhaseCut} to $\kappa=200$ and $\rho=30$, respectively.
The proposed model was trained for 240 epochs, with the learning rate specified to $1 \times 10^{-3}$ and then linearly decreased to 0 for the next 200 epochs.
The network was trained for seven days.

\subsubsection{MRI Images}
To evaluate the proposed network with complex-valued data, we set the hyper-parameters in \eqref{eq:loss_DeepPhaseCut} to $\kappa=\rho=20$. 
We trained the model for 10 epochs with the learning rate $1 \times 10^{-5}$. Then, the learning rate was linearly reduced to 0 for the next 150 epochs.
The training with the fastMRI dataset took about six days.

\subsubsection{Microscopy Images}
To train the proposed network using microscopy images of beads, the hyper-parameters $\kappa$ and $\rho$  in \eqref{eq:loss_DeepPhaseCut}  were set to 200 and 20, respectively.
The initial learning rate was $1 \times 10^{-4}$ for 40 epochs. At 120 epochs of training, the learning rate was decayed to $2 \times 10^{-5}$. The training of the proposed model for microscopy dataset took about seven days.

\subsection{Comparative Studies}
For comparison, HIO \cite{fienup1982phase, fienup1987reconstruction} was used as a representative alternating projection approach for comparison. The hyper-parameter in \eqref{eq:HIO} was set to $\beta=0.9$.
The HIO method ran 10 times from random initialization, and 50 iterations were performed each time. The reconstruction with the smallest residual ($\| \bb - |\Ab\xb_{HIO}| \|_2$) among the 10 estimated signals was used as an initialization for HIO. 
Then, the final reconstruction was obtained by  running 1000 iterations from the initialization.
When applying HIO to real-valued data, non-negativity and support are used as spatial domain constraints \cite{fienup1982phase}. Meanwhile, when applying HIO to complex-valued data, only the support is utilized as spatial domain constraint \cite{fienup1987reconstruction}. 

PhaseCut \cite{waldspurger2015phase} was employed for comparison. We used PhaseCutToolbox, which was provided by the authors in \cite{fogel2016phase} and is available \href{http://www.di.ens.fr/~aspremon}{http://www.di.ens.fr/$\sim$aspremon}. 
The hyper-parameters for solving the SDP program derived from \eqref{eq:lM} were followed as indicated in the toolbox.
As implemented in the PhaseCutToolbox, HIO was performed 25 times on the image reconstructed with PhaseCut.

We also used prDeep \cite{metzler2018prdeep} and online-RED (On-RED) \cite{wu2019online} as a comparison, which are the stat-of-the-art deep learning-based algorithms. 
For prDeep, we utilized a public implementation provided by the authors in \cite{metzler2018prdeep}, which is available at \href{https://github.com/ricedsp/prDeep}{https://github.com/ricedsp/prDeep}. The implementation details were taken from the original paper \cite{metzler2018prdeep}.
PrDeep was only employed for comparison of CelebA, Set12, and microscopy data sets, since the algorithm was proposed for the phase retrieval of signals with real value.
For On-RED, we used the algorithm implemented by the authors, which was provided at \href{https://github.com/wustl-cig/2019-ICCVW-OnlineRED}{https://github.com/wustl-cig/2019-ICCVW-OnlineRED}.
The performance of On-RED is heavily dependent on one of the hyper-parameters $\tau$, which determines the step size $\gamma$. Therefore, $\tau$ was tested with 0.005 steps between 1 and 4, and we set $\tau$ to 2.950 which gives the best performance. We employed On-RED to compare the performance using the CelebA dataset.
The final HIO reconstruction was utilized as an initialization of prDeep and On-RED.


The peak signal-to-noise ratio (PSNR) and the structural similarity (SSIM) index \cite{wang2004image} were used to provide a quantitative evaluation.
For the fastMRI dataset, since the reconstruction is a complex-valued image, the metrics are calculated using the magnitude images.
The PSNR is calculated using mean squared error ($MSE$) as follows:
\begin{eqnarray}
PSNR &=& 20 \cdot \log_{10} \left(\dfrac{MAX_{\xb}}{\sqrt{MSE(\tilde{\xb}, \xb)}}\right), 
\label{eq:psnr}		 
\end{eqnarray}
where $\tilde{\xb}$ denotes a reconstructed image and $\xb$ refers to the ground-truth image. 
$MAX_{\xb}$ is the maximum pixel value of $\xb$.
The SSIM index is defined by \cite{wang2004image}:
\begin{equation}
SSIM = \dfrac{(2\mu_{\tilde{\xb}}\mu_{\xb}+c_1)(2\sigma_{\tilde{\xb}\xb}+c_2)}{(\mu_{\tilde{\xb}}^2+\mu_{\xb}^2+c_1)(\sigma_{\tilde{\xb}}^2+\sigma_{\xb}^2+c_2)},
\end{equation}
where $\mu_{\xb}$, $\sigma_{\xb}^2$, and $\sigma_{\xb_1\xb_2}$ denote average and variance of $\xb$, and covariance of $\xb_1$ and $\xb_2$, respectively. 
$c_1$ and $c_2$ are $(k_1R_\xb)^2$ and $(k_2R_\xb)^2$, respectively, where $R_\xb$ is the dynamic range of pixel values of $\xb$. We used the default values $k_1 = 0.01$ and $k_2 = 0.03$.

\begin{figure*}[!t] 	
\center{ 
\includegraphics[width=14cm]{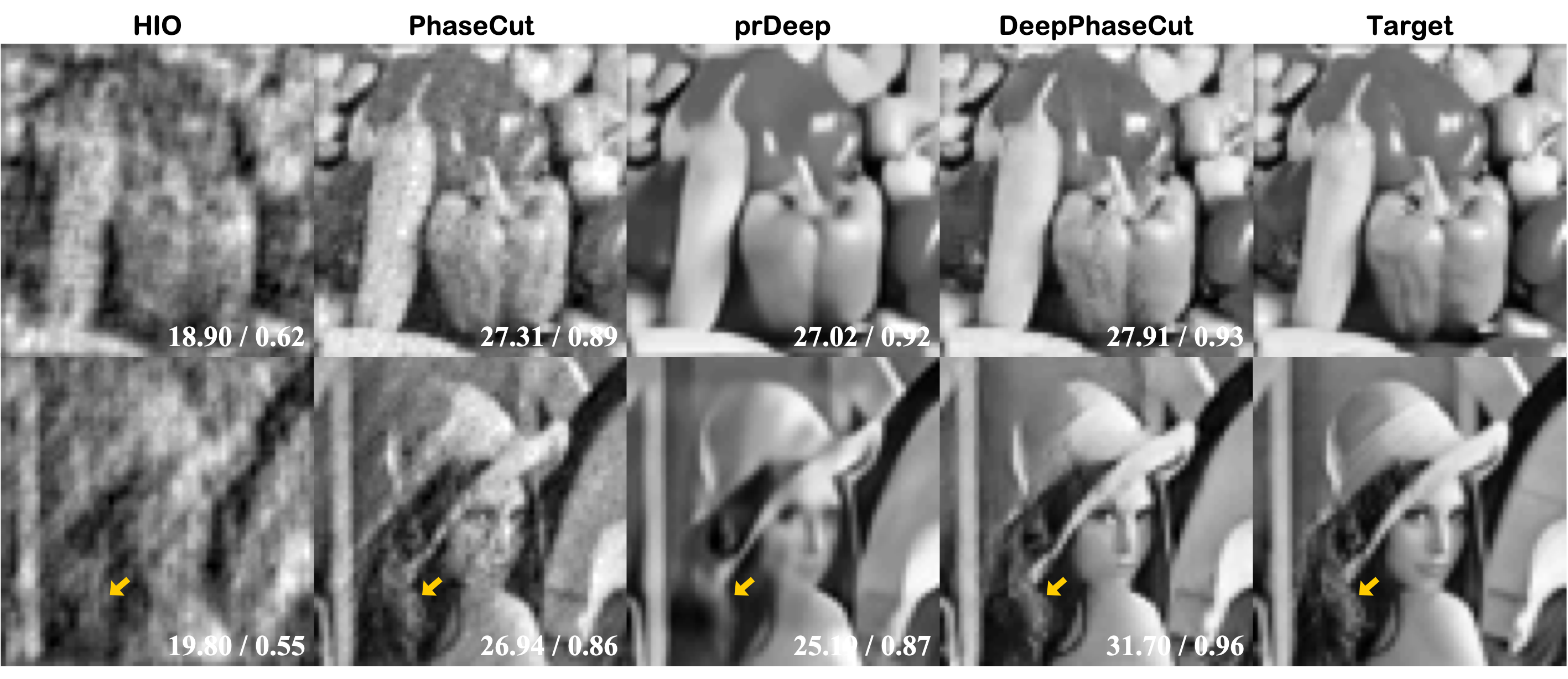}
}
\caption{Natural image reconstruction results using HIO \cite{fienup1982phase}, PhaseCut \cite{waldspurger2015phase},  prDeep \cite{metzler2018prdeep}, and proposed DeepPhaseCut. The values in the corners are PSNR / SSIM index for each image with respect to the target image.}
\label{fig:result_nr}
\end{figure*}

\section{Experimental Results}
\label{sec:experiments}
\subsection{Facial Images}
First, we evaluated the proposed model using the CelebA dataset which consists of real-valued images. 
To verify the feasibility of the proposed method, the performance of the proposed method was compared with those of HIO \cite{fienup1982phase}, PhaseCut \cite{waldspurger2015phase}, prDeep \cite{metzler2018prdeep}, and On-RED \cite{wu2019online}.
As shown in the first column of Fig. \ref{fig:result_celebA}, HIO produced very blurry and noisy results where it is difficult to distinguish between structures such as eyes, nose and lips.
PhaseCut usually produced better results than HIO, making it easier to identify the eyes, nose, mouth, etc. on the face. However, the texture of the face was  not  often properly restored.
As depicted in third column of Fig. \ref{fig:result_celebA}, although prDeep provided better reconstruction results by removing many unnatural artifacts, noisy patterns were still present in most of the image. 
Even though the HIO reconstruction was utilized as the initialization for On-RED, On-RED provided results that were almost similar to the results using HIO. 
This may be because  On-RED was proposed for phase retrieval from coded diffraction patterns (CDP) \cite{candes2013phaselift},
rather than for the Fourier phase retrieval;  therefore, we do not carry out the comparison with On-RED using other data sets.
As illustrated in the fifth column of Fig. \ref{fig:result_celebA}, the proposed DeepPhaseCut provided the best reconstruction results compared to other algorithms.
In particular, the detailed structures such as the shape of eyebrows and wrinkles can be restored closely following the ground-truth image.

\begin{table}[!hbt]
 \begin{center}
 	\resizebox{0.4\textwidth}{!}{
 \begin{tabular}{c|ccc}
\hline
  & PSNR [dB] & SSIM & Runtime [sec]\\
\hline\hline
HIO & 19.573 & 0.472 & 1.852\\
PhaseCut & 25.362 & 0.760 & 1044.720\\
prDeep & 26.057 & 0.738  & 33.998 \\
On-RED & 19.796 & 0.494 & 2.732\\
DeepPhaseCut & \textbf{27.119} & \textbf{0.854} & 0.048 \\
\hline
 \end{tabular}
}
 \end{center}
 \caption{Quantitative comparison of various algorithms using the CelebA dataset. The results are average values from 2,478 test images.}
 \label{table:result_celeba}
 \end{table}

In Table \ref{table:result_celeba}, the average performance of the algorithms for 2,478 images are given for quantitative comparison at the test phase. 
The proposed DeepPhaseCut significantly outperformed HIO, PhaseCut, prDeep, and On-RED in terms of PSNR and SSIM index.
In particular, our method outperformed other methods by 1.06 $\sim$ 7.55 dB in terms of PNSR.
Furthermore, as shown in Table  \ref{table:result_celeba}, the proposed DeepPhaseCut is computationally more efficient than other algorithms.
For example, several runs with different initializations are required to obtain proper results for HIO and prDeep, since they are very sensitive to the initialization. This leads to an increase in the runtime for the reconstruction using HIO and prDeep.
Furthermore, PhaseCut is very computationally expensive due to the interior point methods used to solve SDP program.
This confirmed that the proposed method not only offers better quality of the reconstruction, but also has a computational advantage over other algorithms that are compared here.

\subsection{Natural Images}
We also carried out experiments with the dataset of natural images. The trained model using DIV2K and BSD300 data sets was evaluated using Set12 dataset. 
As shown in Fig. \ref{fig:result_nr}, the proposed DeepPhaseCut restored the phase information properly, with the reconstructions being very similar to the target images. 
Compared to the results with PhaseCut or prDeep, DeepPhaseCut produces more realistic textures and sharp edges. 
As indicated by the arrows in Fig. \ref{fig:result_nr}, the tails of the hat can better be demonstrated in the results using DeepPhaseCut than using HIO, PhaseCut, or prDeep. 
More specifically, only the approximate structure can be identified without details present in the reconstruction using HIO. In the results with PhaseCut, the noise pattern is present overall, while there is no specific noise pattern in the reconstruction with DeepPhaseCut.
Even though the loss function to train the network originated from the optimization problem of PhaseCut, the proposed DeepPhaseCut is composed of not only the phase network $\Gc_\Theta$, but also the image refinement network $\Hc_\Psi$. This leads to the improved performance for phase retrieval problem.
Through the visual comparison as well as quantitative comparison using PSNR and SSIM index expressed in Fig. \ref{fig:result_nr}, the proposed DeepPhaseCut significantly outperforms the comparison algorithms.
Specifically, the reconstructed image using the proposed method shows 0.89 $\sim$ 11.90 dB gain in terms of PSNR over HIO, PhaseCut, and prDeep.

\subsection{MR Images}

To verify the applicability of the proposed method to complex-valued signal, we performed the experiment using the fastMRI dataset.
The proposed method was compared to HIO \cite{fienup1987reconstruction} and PhaseCut, since they can use complex valued data.

\begin{figure}[!hbt] 	
\center{ 
\includegraphics[width=9cm]{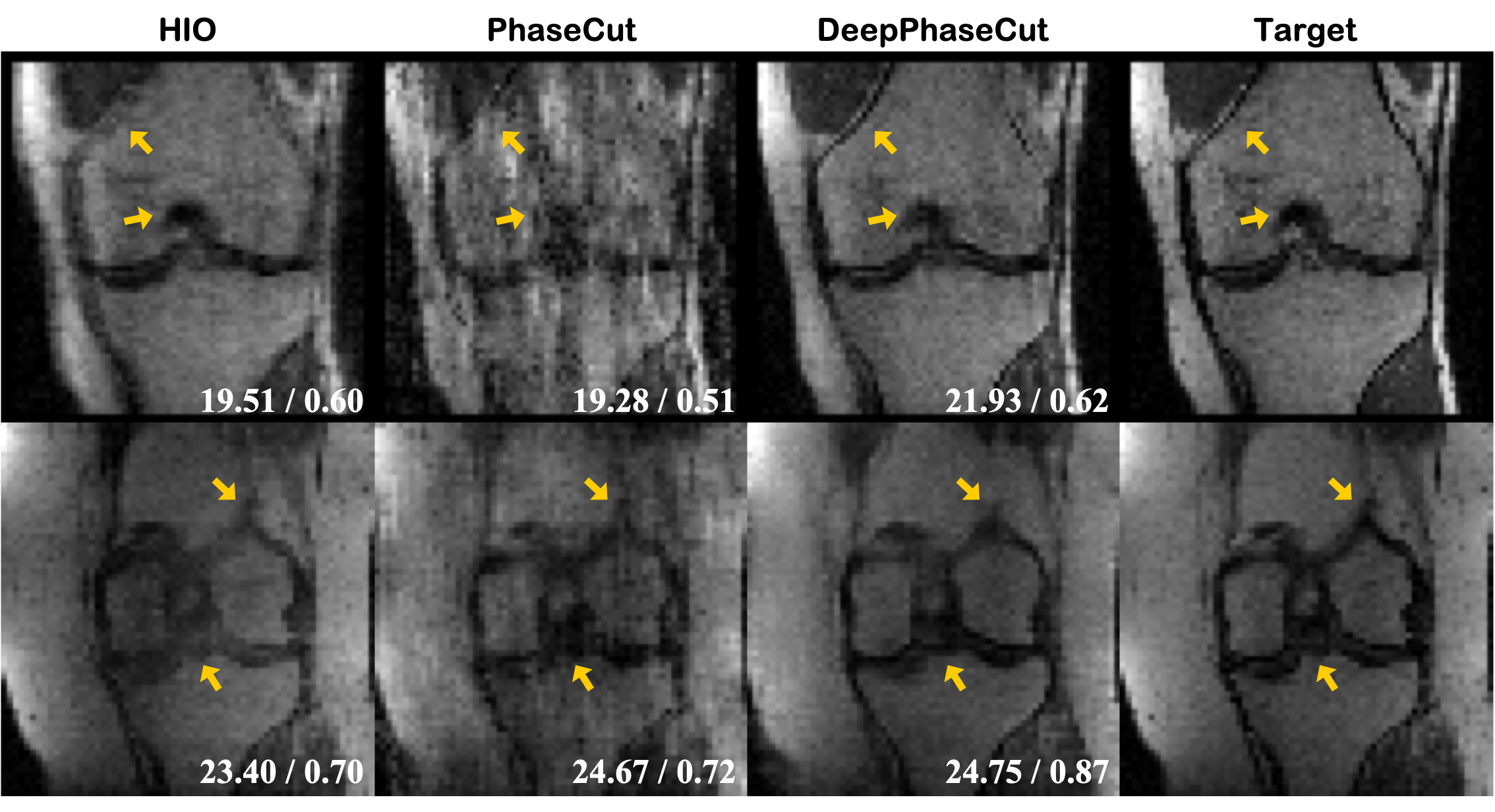}
\vspace*{-0.5cm}
}
\caption{FastMRI reconstruction results using HIO \cite{fienup1987reconstruction}, PhaseCut \cite{waldspurger2015phase}, and proposed DeepPhaseCut. The values in the corners are PSNR / SSIM index for individual image with respect to the ground-truth image.}
\label{fig:result_fastmri}
\end{figure}

\begin{figure*}[!hbt] 	
\center{ 
\includegraphics[width=14cm]{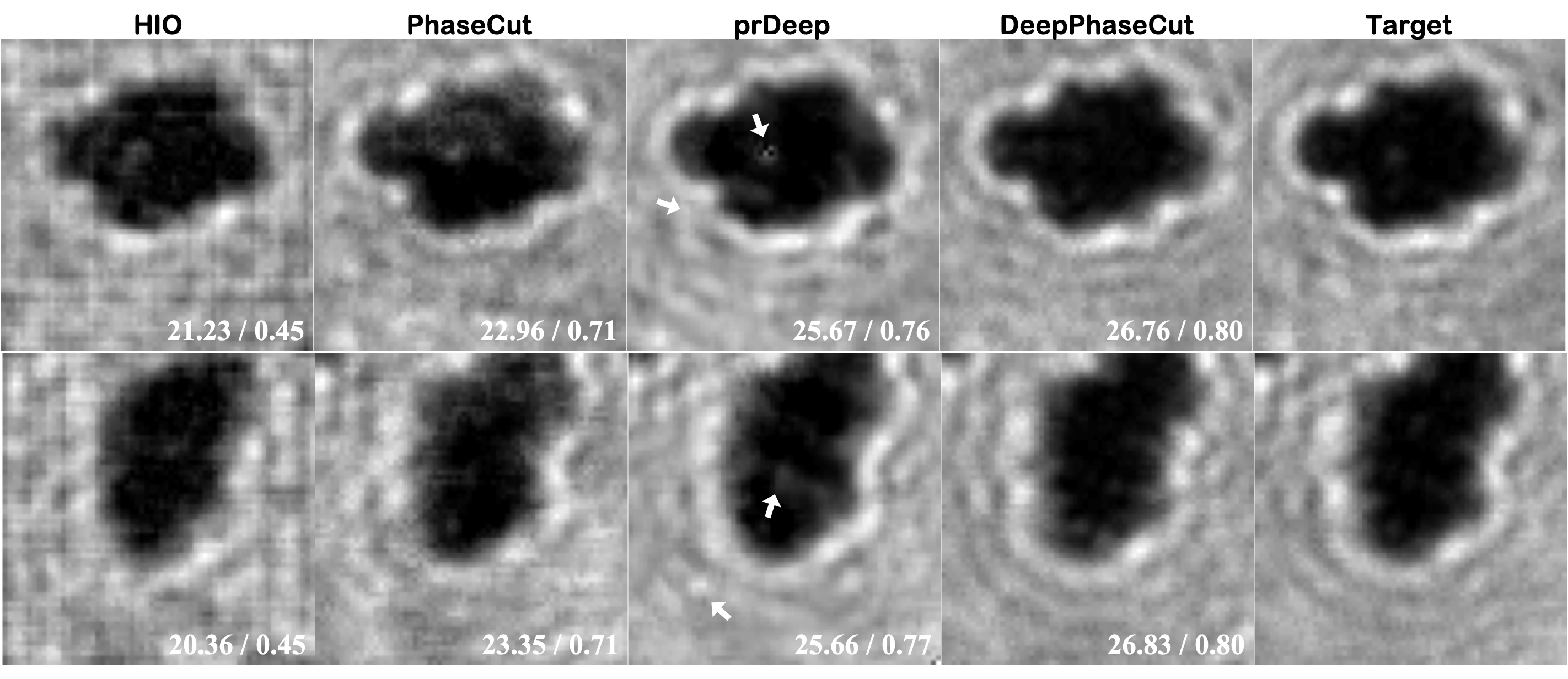}
}
\caption{Reconstructed results using HIO \cite{fienup1982phase}, PhaseCut \cite{waldspurger2015phase},  prDeep \cite{metzler2018prdeep}, and proposed DeepPhaseCut. The numbers written in the images are the corresponding PSNR / SSIM index with respect to the target image.}
\label{fig:result_micro}
\end{figure*}

Here, the first column of Fig. \ref{fig:result_fastmri} contains the results using HIO, where the support was utilized as the spatial constraint. 
Results reconstructed with HIO tended to be blurry and the details of the images were not clearly restored, while the proposed method provided better spatial resolution and the results similar to the target images.
By comparing the second and third column in Fig. \ref{fig:result_fastmri}, we found that the proposed method significantly outperformed PhaseCut. 
Specifically, PhaseCut produced artificial structures as indicated by the arrows which was not visible in the results using DeepPhaseCut.
The loss function for training the proposed DeepPhaseCut is composed of not only  $\ell_{pcut}$, which is  derived from PhaseCut, but also $\ell_{adv}$ and $\ell_{cycle}$, which can address the ambiguity of the phase retrieval problem. Therefore, the improvement in the performance can be achieved with DeepPhaseCut.
Furthermore, the reconstructed image using the proposed method is about 0.08 $\sim$ 2.35 dB gain over HIO and PhaseCut.

\begin{table}[!hbt]
 \begin{center}
	\resizebox{0.4\textwidth}{!}{
 \begin{tabular}{c|c|c|c}
  \hline
  & PSNR [dB] & SSIM index & Runtime [sec]\\
\hline\hline
HIO & 18.806 & 0.301 & 15.648\\
PhaseCut & 18.217 & 0.395 & 1390.480 \\
DeepPhaseCut & \textbf{19.820} & \textbf{0.399} & 0.042 \\
\hline
 \end{tabular}
}
 \end{center}
 \caption{Quantitative comparison of various reconstruction algorithms using the fastMRI datset. The results are mean values from the test set, which corresponds to 4,873 images of 156 subjects.}
 \label{table:result_fastmri}
 \end{table}

Table \ref{table:result_fastmri} shows average results for the quantitative comparison using 4,873 images.
The PSNR value of the reconstruction using the proposed method increased up 1.01 dB and 1.60 dB compared to the HIO and PhaseCut, respectively, in spite of significantly faster computational time.
This gain in performance obviously confirmed that the proposed method can provide superior results than the competing algorithms for the complex-valued data.

\subsection{Microscopy Images}
Fig. \ref{fig:result_micro} shows reconstruction results using HIO, PhaseCut, prDeep and DeepPhaseCut from the real microscopy data.
The artificial structures were generated in the reconstruction using HIO, and 
the deformation of the structure was reduced in the results reconstructed with PhaseCut.
However, there were still artifacts that make it difficult to identify the distribution of microbeads.
Thanks to the use of the pretrained network for the regularization term, the results with prDeep were better than with HIO and PhaseCut.
When reconstructing with prDeep, over-smoothing artifact is often reported, while the results with DeepPhaseCut provide nearly similar details and contrast to the target images.

\begin{table}[!hbt]
 \begin{center}
\resizebox{0.4\textwidth}{!}{
	 \begin{tabular}{c|c|c|c}
\hline
  & PSNR [dB] & SSIM index & Runtime [sec]\\
\hline\hline
HIO & 19.691 & 0.433 & 2.244 \\
PhaseCut & 22.409 & 0.686 & 1045.812\\
prDeep & 23.849 & 0.650  & 39.592 \\
DeepPhaseCut & \textbf{24.284} & \textbf{0.748} & 0.044 \\
\hline 
 \end{tabular}
}
 \end{center}
 \caption{Quantitative comparison and runtime of algorithms using dataset of microscopy images. The results are mean values from the test set, which corresponds to 1,000 images.}
 \label{table:result_micro}
 \end{table}

Table \ref{table:result_micro} includes the quantitative evaluation results of the comparison algorithms and DeepPhaseCut using 1,000 microscopy images at the test phase. The proposed DeepPhaseCut significantly outperformed the competing algorithms in terms of PSNR and SSIM index. 
Specifically, the proposed method exceeds HIO with a large margin in terms of PSNR.
By comparison, we found that the proposed DeepPhaseCut increases PSNR by $0.44 \sim 2.07$ dB compared to PhaseCut and prDeep. 
Furthermore,  DeepPhaseCut is computationally negligible compared to other algorithms, since the reconstruction is performed using the feed-forward networks consisting of the phase network $\Gc_\Theta$ and image refinement network $\Hc_\Psi$ at the test phase.
This clearly confirms that the proposed method offers improvement in performance as well as efficient computational complexity.

\section{Discussion}
\label{sec:discussion} 

\subsection{Optimal Configuration}

\begin{figure}[!hbt] 	
\center{ 
\includegraphics[width=9cm]{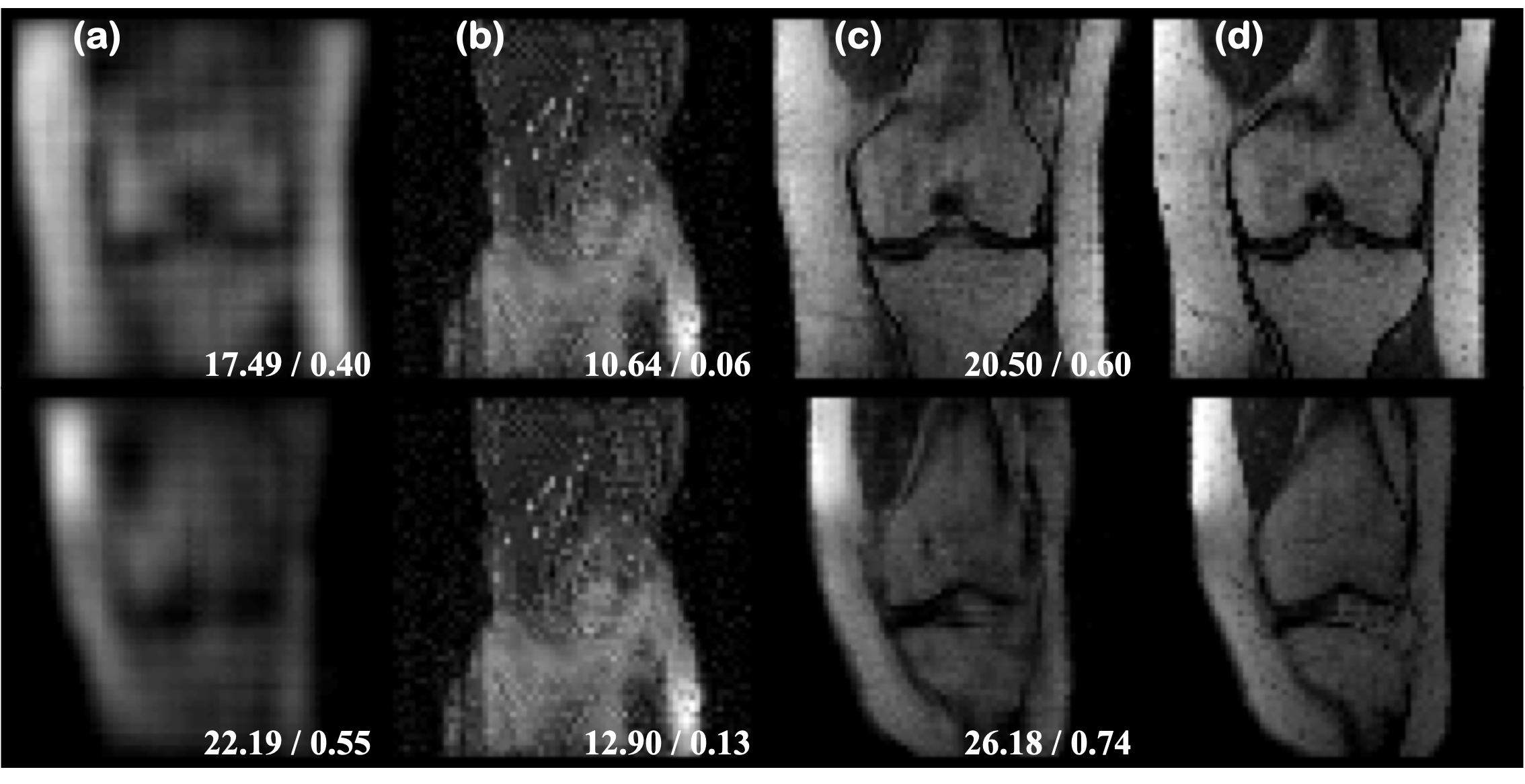}
\vspace*{-0.5cm}
}
\caption{Reconstruction results using an ablated network only consisting of (a) $\Gc_\Theta$ and (b) $\Hc_\Psi$ , respectively. (c) indicates reconstruction results using the proposed DeepPhaseCut. (d) is the target images. The values in the corners are PSNR / SSIM index for individual image with respect to the target image.}
\label{fig:abl_network}
\end{figure}

The proposed network is composed of two generators, $\Gc_\Theta$ and $\Hc_\Psi$. Ablation studies were performed using a single generator to restore the phase information by excluding $\Hc_\Psi$ or $\Gc_\Theta$ from the proposed DeepPhaseCut.
The reconstruction results using the fastMRI dataset are illustrated in Fig. \ref{fig:abl_network}.
The first column shows the results from the network only with $\Gc_\Theta$. 
The results from the network composed of only $\Hc_\Psi$ are demonstrated in the second column.
The reconstructed images using the proposed DeepPhaseCut are illustrated in the third column.
In Fig. \ref{fig:abl_network}(a), the results produced by the phase network $\Gc_\Theta$ are more blurry than the reconstruction using the proposed DeepPhaseCut and most of the details are not restored. 
In Fig. \ref{fig:abl_network}(b), the image refinement network $\Hc_\Psi$ trained alone provides erroneous reconstruction
results, and even resulted in the same reconstruction results from the different Fourier magnitude data. 
By comparing Fig. \ref{fig:abl_network}(a) and Fig. \ref{fig:abl_network}(b), we concluded that the network $\Gc_\Theta$, which predicts the phase from the measurement $\bb$, is very important to restore the phase.

When both networks are used in our method,  we observed 3.01 $\sim$ 13.28 dB better than the ablated networks in terms of PSNR.
This confirmed that our DeepPhaseCut algorithm composed of image refinement after phase estimation is well suited for solving the phase retrieval problem.

\subsection{Ablation Study of Losses}

In order to verify the effect of each component in our designed loss function, we conducted an ablation study using microscopy data in which the building blocks of the loss function were individually excluded.
Table \ref{table:abl_loss} shows quantitative comparison of ablation study. 
We observe that the performance  is critically dependent upon the loss function.

Specifically, without the cycle consistency loss, 
the evaluation metric of PSNR and SSIM index become significantly degraded. This suggests the importance of the consistency in the image and the measurement domain.
On the other hand, the ablated network trained without $\ell_{pcut}$ produced better results than that without $\ell_{cycle}$, but there is still a performance drop compared to the proposed DeepPhaseCut.
This implies that minimizing the proposed $\ell_{pcut}$, derived from PhaseCut cost, plays an important role in estimating the phase from the given magnitude information. 
By comparing the performance between the model trained without $\ell_{adv}$ and the proposed DeepPhaseCut, we confirm that minimizing $\ell_{adv}$ encouraged the model to follow the actual distribution of the target domain $\Xc$ by addressing the ambiguity of the solutions.

This ablation study clearly confirmed the critical importance of each component of our proposed loss function in DeepPhaseCut.

\begin{table}[!hbt]
 \begin{center}
\resizebox{0.4\textwidth}{!}{
 \begin{tabular}{c|c|c}
  \hline
  & PSNR [dB] & SSIM index \\
\hline
\hline
Proposed w/o $\ell_{cycle}$ & 9.719 & 0.098 \\
Proposed w/o $\ell_{pcut}$ & 21.513 & 0.676 \\
Proposed w/o $\ell_{adv}$ & 23.653 & 0.741 \\
Proposed (DeepPhaseCut) & \textbf{24.284} & \textbf{0.748} \\
\hline
 \end{tabular}
}
 \end{center}
 \caption{Quantitative comparison of ablation study on loss function using the microscopy dataset. The results are mean values from the test set, which corresponds to 1,000 images.}
 \label{table:abl_loss}
 \end{table}

\section{Conclusion}
\label{sec:conclusion}
In this paper, we propose a novel unsupervised learning approach for phase retrieval inspired by the alternating projection method and the modern PhaseCut algorithm.
In particular, our DeepPhaseCut consists of two networks: one for the phase estimation and another for image refinement.
For the training of phase estimation network, we propose a novel loss, dubbed PhaseCut loss, which is originated from the analysis of the optimization problem in PhaseCut.
Furthermore, to address the ambiguity of the phase retrieval problem, we employed the adversarial loss and cycle consistency loss so that the architecture of DeepPhaseCut is similar to cycleGAN. 
Thanks to our novel architecture, the network of DeepPhaseCut can be trained without the need for the matched data as the label.
We performed extensive experiments using real-valued and complex-valued data to verify the versatility of the proposed DeepPhaseCut for phase retrieval. 
Our method provided superior reconstruction results, in which both the support and the details can be properly restored with a single feed-forward network, even though the network was trained using unmatched dataset. Thus, the proposed DeepPhaseCut may provide a new research direction for Fourier phase retrieval problem.


%

\section*{Acknowledgment}
This work was supported by the National Research Foundation (NRF) of Korea grant NRF-2020R1A2B5B03001980.


\bibliographystyle{IEEEtran}
\bibliography{submit_bib}

\end{document}